\newtheorem{thm}{Theorem}
\newtheorem{prop}[thm]{Proposition}
\newtheorem{cor}[thm]{Corollary}
\newtheorem{lem}[thm]{Lemma}
\newcommand{\ldim}{\mathsf{Ldim}}
\newcommand{\eldim}{\mathsf{effLdim}}
\newcommand{\tdim}{\mathsf{Tdim}}
\newcommand{\etdim}{\mathsf{effTdim}}
\title{Effective Littlestone Dimension}
\author{Valentino Delle Rose$^{12}$, Alexander Kozachinskiy$^{2}$, Tomasz Steifer$^{234}$}
\date{%
    $^1$Department of Computer, Control and Management Engineering “Antonio Ruberti”, Sapienza University of Rome\\%
    $^2$Centro Nacional de Inteligencia Artificial, Santiago, Chile\\%
    $^3$Instituto de Ingeniería Matemática y Computacional, Universidad Católica de Chile\\
    $^4$Institute of Fundamental Technological Research, Polish Academy of Sciences\\%
}
\begin{document}

\maketitle

\begin{abstract}
     Delle Rose et al.~(COLT'23)  introduced an effective version of the Vapnik-Chervonenkis dimension, and showed that it characterizes improper PAC learning with total computable learners. In this paper, we introduce and study a similar effectivization of the notion of Littlestone dimension.  Finite effective Littlestone dimension is a necessary condition for computable online learning but is not a sufficient one---which we already establish for classes of the effective Littlestone dimension 2. However, the effective Littlestone dimension equals the optimal mistake bound for computable learners in two special cases: a) for classes of Littlestone dimension 1 and b) when the learner receives as additional information an upper bound on the numbers to be guessed. Interestingly, finite effective Littlestone dimension also guarantees that the class consists only of computable functions. 
\end{abstract}

\section{Introduction}
Two fundamental models of machine learning, PAC learning and online learning, have been recently revisited from the viewpoint of computability theory~\cite{agarwal2020learnability,sterkenburg2022characterizations,delle2023find,hasrati2023computable}. In the classical setting, a learning algorithm is understood as a function, getting a sample $S$ and an input $x$ and outputting its prediction of the value on $x$. Although this is called an ``algorithm'', it is not assumed to have a Turing machine that computes it.
The existence of a learning algorithm for a hypothesis class can 
be characterized by a combinatorial dimension of that class, namely, the VC dimension in the case of PAC learning and the Littlestone dimension in the case of online learning.

What if we do require a learning algorithm to be computable by a Turing machine? We obtain ``computable counterparts'' of  PAC and online learning models that might no longer be characterized just by a combinatorial dimension. 
For instance, Strekenburg~\cite{sterkenburg2022characterizations} constructs a class with finite VC dimension, given by a decidable set of functions with finite support, that has no computable PAC learner, even if the learner is allowed to be improper (output functions outside the class). Likewise, Hasrati and Ben-David~\cite{hasrati2023computable} observe that there is a class that has Littlestone dimension 1 and consists of finitely supported functions but does not have an online learner, computable by a partial Turing machine  (the machine must be defined on realizable samples but otherwise it might not halt)   with finite number of mistakes. 

Delle Rose at al.~\cite{delle2023find} recently characterized computable PAC learning via an  \emph{effectivization} of the notion of the VC dimension. The usual VC dimension of a class $H$   is defined as the maximal size of a subset of the domain where functions from $H$ can realize all dichotomies.  The ``dual'' defintion is  the minimal $d$ such that for any subset of size $d+1$ there exists a dichotomy, not realizable by $H$. In the effective version of VC dimension, there must be a Turing machine that, given an $(d+1)$-size subset, outputs a dichotomy, not realizable by $H$. The minimal $d$ for which such a Turing machine exists is called the effective VC dimension of $H$. As Delle Rose et al.~\cite{delle2023find} show, classes admitting a computable PAC learner are exactly classes having finite effective VC dimension. They asume that the learner can be improper but has to be computed by a \emph{total} Turing machine, that is, it must halt even on non-realizable inputs.

In this paper, we introduce a similar ``effectivization'' of the Littlestone dimension and study its relationship with the computable online learning. The usual Littlestone dimension of a hypothesis class $H$ is defined as the maximal $d$ for which there exists a depth-$d$ Littlestone tree with every branch realizable by $H$. Following the idea of~\cite{delle2023find}, we define the effective Littlestone dimension of $H$ as the \emph{minimal} $d$ for which there exists a Turing machine that, given a Littlestone tree of depth $d+1$, indicates a branch, not realizable by $H$.

Our contribution with respect to the effective Littlestone dimension consists of the following.
\begin{itemize}
 \item In a similar manner, we define the notion of the effective threshold dimension and observe that classes with finite effective Littlestone dimension coincide with classes of finite effective threshold dimension.
    \item We observe that a class that admits an online learner, computable by a total Turing machine (that we all, for brevity, a total computable online learner), that makes at most $d$ mistakes, has effective Littlestone dimension at most $d$.
   
    \item We show that the converse does not hold. We construct a class of effective Littlestone dimension 2 that does not admit even a partial computable online learner (``partial'' means that the Turing machine, computing it, might not halt on some non-realizable samples) with a finite number of mistakes. 
    \item On the positive side, we show that effective Littlestone dimension is equivalent to a computable online learning ``with an upper bound''. In this setting, the learner is given in an advance an upper bound on numbers it will see in the game.


    \item We also show that every class of finite effective Littlestone dimension consists of computable functions. As a consequence, every class of effective Littlestone dimension 1 admits a total computable online learner with 1 mistake.
\end{itemize}
Similar failure of the combinatorial characterization of computable learning was recently observed by Gourdeau, Tosca, and Urner~\cite{gourdeau2024computability} for computable robust PAC learning.


\section{Preliminaries}
By \emph{hypothesis classes} we mean sets of functions from $\mathbb{N}$ to $\{0,1\}$. By \emph{samples} we mean finite sequences of pairs from $\mathbb{N}\times \{0, 1\}$. A sample $S = (x_1, y_1)\ldots (x_k, y_k)$ is \emph{consistent} with a function $f\colon\mathbb{N}\to\{0,1\}$ if $f(x_1) = y_1, \ldots, f(x_k) = y_k$. A sample $S = (x_1, y_1) \ldots (x_k, y_k)$ is \emph{realizable} by a hypothesis class $H$ (or $H$-realizable, for brevity) if there is a function in $H$ with which $S$ is consistent.

A \emph{learner} is a partial function $L\colon (\mathbb{N}\times\{0, 1\})^*\times\mathbb{N}\to\{0,1\}$ (thus, the first input to $L$ is a sample and the second input is a natural number).  We say that a learner $L$ is a learner for a hypothesis class $H$ if for every $H$-realizable sample $S$ the value $L(S, x)$ is defined for every $x\in\mathbb{N}$. A total learner is a learner which is defined everywhere. Sometimes we write ``partial learner'' to stress that a statement holds not only for total learners.

A learner $L$ is computable if there exists a Turing machine that outputs $L(S, x)$ on $(S, x)$ for which $L$ is defined, and does not halt on $(S, x)$ for which $L$ is not defined.

For a given sample $S$, the learner induces a (possibly, partial) function $L_S\colon\mathbb{N}\to\{0,1\}$ by setting $L_S(x) = L(S, x)$, to which we refer as the hypothesis of $L$ after the sample $S$.

Let $L$ be a learner and $S = (x_1, y_1)\ldots (x_k, y_k)$ be a sample. \emph{The number of mistakes} of $L$ on $S$ is the number of $i\in\{1, \ldots, k \}$ such that 
$L((x_1, y_1)\ldots (x_{i-1}, y_{i-1}), x_{i}) \neq y_{i}$. One can interpret this quantity as follows. Imagine that $L$ receives pairs of $S$ one by one. Each pair  $(x_{i}, y_{i})$ is given like this: first $L$ receives $x_{i}$ and is asked to predict $y_{i}$, using its knowledge of the preceding pairs in the sample. After $L$ makes a prediction, the true value of $y_{i}$ is revealed, causing a \emph{mistake} if the prediction differs from $y_{i}$.

A learner $L$ for a hypothesis class $H$ is called an \emph{online learner} for $H$ \emph{with at most} $d$ \emph{mistakes} if $L$ makes at most $d$ mistakes on any $H$-realizable sample.
\begin{lem}
\label{lemma_cons}
    Let $H$ be a hypothesis class and $L$ be an online learner for $H$ with at most $d$ mistakes, for some $d\in\mathbb{N}$. Then every function $f\in H$ coincides with $L_S$ for some sample $S$, consistent with $f$.
\end{lem}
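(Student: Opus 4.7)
The plan is to argue by contradiction: assuming no consistent sample induces $f$ as $L_S$, we iteratively build a sample that forces $L$ to make more than $d$ mistakes, contradicting the mistake bound.

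More concretely, suppose toward contradiction that for every sample $S$ consistent with $f$, the hypothesis $L_S$ differs from $f$ at some point. I would then construct a chain of samples $S_0 \subsetneq S_1 \subsetneq \cdots \subsetneq S_{d+1}$, each consistent with $f$, starting from $S_0 = \varepsilon$. Given $S_i$ consistent with $f$ (hence $H$-realizable, since $f \in H$), the learner $L$ is defined on $(S_i, x)$ for all $x \in \mathbb{N}$ because $L$ is a learner for $H$. By the contradictory assumption, there exists some $x_{i+1}$ with $L(S_i, x_{i+1}) \neq f(x_{i+1})$; set $y_{i+1} = f(x_{i+1})$ and $S_{i+1} = S_i \cdot (x_{i+1}, y_{i+1})$. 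Since $y_{i+1} = f(x_{i+1})$, the new sample remains consistent with $f$, so the induction may proceed.

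By construction, at each of the $d+1$ steps the pair appended to $S_i$ produces a mistake of $L$ in the online reading of the sample, so the number of mistakes of $L$ on $S_{d+1}$ is at least $d+1$. But $S_{d+1}$ is consistent with $f \in H$ and therefore $H$-realizable, contradicting the assumption that $L$ is an online learner for $H$ with at most $d$ mistakes.

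There is no real obstacle here; the only subtlety worth flagging explicitly is that the definition of a learner for $H$ guarantees $L_{S_i}$ is a total function on $\mathbb{N}$ at every step (so the choice of $x_{i+1}$ is legitimate), and that one should take $f(x_{i+1})$ rather than the opposite label when extending, in order to keep the growing sample consistent with $f$.
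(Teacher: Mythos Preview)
Your proof is correct and follows essentially the same approach as the paper: assume no $f$-consistent sample $S$ yields $L_S = f$, then iteratively pick a point of disagreement to extend the sample, forcing arbitrarily many mistakes on an $H$-realizable sample. The only difference is presentational---you stop explicitly at $d+1$ steps, while the paper phrases it as forcing ``arbitrarily many'' mistakes---but the argument is the same.
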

\begin{proof}
    Indeed, if there is no such sample, we can construct a sample, consistent with $f$, on which $L$ makes more than $d$ mistakes. Namely, we start with the hypothesis of $L$ after the empty sample. It disagrees with $f$ on some $x_1\in \mathbb{N}$ which we put to the sample as $(x_1, f(x_1))$, causing the first mistake. The hypothesis of $L$ after $(x_1, f(x_1))$ disagrees with $f$ on some $x_2$, and we add this $(x_2, f(x_2))$ to the sample, forcing the second mistake, and so on. In this way, we can force arbitrarily many mistakes.   
\end{proof}
\begin{cor}
Let $H$ be a hypothesis class that for some $d\in\mathbb{N}$, has a computable online learner $L$, making at most $d$ mistakes on $H$. Then all functions in $H$ are computable.
\end{cor}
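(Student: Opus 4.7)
The plan is to apply \Cref{lemma_cons} essentially verbatim. Fix any $f \in H$. By \Cref{lemma_cons}, there exists a finite sample $S$, consistent with $f$, such that $f = L_S$, that is, $f(x) = L(S, x)$ for every $x \in \mathbb{N}$.

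Next, I would observe that the hypothesis $L_S$ is a computable (and total) function from $\mathbb{N}$ to $\{0,1\}$. Totality is automatic: $S$ is $H$-realizable (since it is consistent with $f \in H$), and by definition of a learner for $H$ the value $L(S, x)$ is defined for every $x$. Computability is immediate from the fact that $L$ itself is computable: one obtains a Turing machine for $L_S$ by hard-coding the finite string $S$ and invoking the machine for $L$ on the input $(S, x)$. By the assumption on computable learners in the preliminaries, this machine halts for every $x$ and outputs $L(S, x) = f(x)$.

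Combining the two steps, $f$ equals the computable function $L_S$, so $f$ is computable. Since $f \in H$ was arbitrary, every function in $H$ is computable.

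There is no real obstacle: the only thing worth highlighting is that the argument does not need to produce $S$ effectively from $f$. Existence of some $S$ suffices, because computability of $f$ is just the existence of \emph{some} Turing machine computing it.
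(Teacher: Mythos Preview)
Your proof is correct and follows essentially the same approach as the paper: apply Lemma~\ref{lemma_cons} to obtain an $H$-realizable sample $S$ with $f = L_S$, then observe that $L_S$ is total (since $S$ is $H$-realizable) and computable (by hard-coding $S$ into the machine for $L$). Your added remark that $S$ need not be found effectively from $f$ is a helpful clarification but does not change the argument.
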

\begin{proof}
By Lemma \ref{lemma_cons}, every function $f\in H$ coincides with $L_S$ for some sample $H$-realizable sample $S$. Since $L$ is a learner for $H$ and is computable, the function $L_S$ is computable. 
\end{proof}

By a \emph{Littlestone tree} of depth $d$ we mean a complete rooted binary tree of depth $d$ where: (a) edges are directed from parents to children, with each edge labeled by 0 or 1 such that every non-leaf node has one out-going 0-edge and one out-going 1-edge; and (b) non-leaf nodes are labeled by natural numbers. Every edge in such a tree can be assigned a pair $(x, y)\in\mathbb{N}\times \{0,1\}$ where $x$ is the natural number,  labelling  node this edge starts at, and $y$ is the bit, labelling this edge.
Thus, every directed path in this tree can be assigned a sample, obtained by concatenating pairs, assigned to its edges. Now, for a vertex $v$ of a Littlestone tree $T$, and for a hypothesis class $H$, we say that $v$ is $H$-realizable if the sample, written on the path from the root of $T$ to $v$, is $H$-realizable.

The \emph{Littlestone dimension} of a class $H$, denoted by $\ldim(H)$, is the minimal $d\ge 0$ such that in every $(d+1)$-depth Littlestone tree $T$ there exists a leaf which is not $H$-realizable. The \emph{effective Littlestone dimension} of a class $H$, denoted by $\eldim(H)$,
 is the minimal $d\ge 0$ for which there exists a total Turing machine that, given as input 
a Littlestone tree of depth $d+1$, outputs some leaf of this tree which is not $H$-realizable.

\begin{prop}[\cite{littlestone1988learning}]
\label{prop_littlestone}
For any class $H$, the minimal $d\ge 0$ for which there exists an online learner for $H$ with at most $d$ mistakes is equal to $\ldim(H)$.

\end{prop}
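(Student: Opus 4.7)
My plan is to prove both inequalities. For the lower bound (that any online learner for $H$ must make at least $\ldim(H)$ mistakes in the worst case), I would use an adversary argument. Let $d = \ldim(H)$ and fix, by definition of $\ldim$, a depth-$d$ Littlestone tree $T$ with every leaf $H$-realizable. Given any learner $L$, the adversary descends through $T$: at the current non-leaf node, labelled by some $x$, it queries the learner's prediction $b$ on $x$ given the sample constructed so far, appends $(x, 1-b)$ to the sample, and moves to the $(1-b)$-child. After $d$ such rounds the resulting sample is $H$-realizable (since the reached leaf is) and $L$ has made exactly $d$ mistakes on it.

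For the upper bound, I would analyse the Standard Optimal Algorithm (SOA). Given a sample $S$, let $V(S) = \{f \in H : f \text{ is consistent with } S\}$ be the current version space. On a fresh input $x$, for $b \in \{0,1\}$ put $V_b = \{f \in V(S) : f(x) = b\}$ and output a bit $b^* \in \arg\max_b \ldim(V_b)$. The crucial invariant is that each mistake strictly decreases $\ldim(V(S))$. Indeed, if SOA predicts $b^*$ and the true label is $1-b^*$, then the new version space is $V_{1-b^*}$, which has $\ldim(V_{1-b^*}) \leq \ldim(V_{b^*})$. On the other hand, gluing together Littlestone trees witnessing $\ldim(V_0)$ and $\ldim(V_1)$ under a new root labelled by $x$ yields an $H$-realizable Littlestone tree of depth $1 + \min(\ldim(V_0), \ldim(V_1))$, so $\ldim(V(S)) \geq 1 + \ldim(V_{1-b^*})$. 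Since $\ldim(V(\emptyset)) = \ldim(H)$, at most $\ldim(H)$ mistakes can ever occur, proving SOA is an online learner with $\leq \ldim(H)$ mistakes.

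The substantive step is the SOA analysis, and in particular the combinatorial observation that a node labelled by $x$ with subtrees witnessing $\ldim(V_0)$ and $\ldim(V_1)$ attached on either side forms a Littlestone tree of depth $1 + \min(\ldim(V_0), \ldim(V_1))$ all of whose branches remain $V(S)$-realizable. The adversary lower bound and the bookkeeping of the version space are routine once this tree-gluing lemma is in hand.
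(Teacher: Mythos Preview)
The paper does not supply its own proof of this proposition; it is stated as a classical result with a citation to Littlestone~\cite{littlestone1988learning}. Your argument is correct and is precisely the standard proof from that reference: an adversary walking down a fully $H$-realizable depth-$\ldim(H)$ tree for the lower bound, and the Standard Optimal Algorithm together with the tree-gluing observation (which the paper records separately as Proposition~\ref{prop_less}) for the upper bound.

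One small point worth making explicit, given the conventions of the paper: Littlestone trees here are \emph{complete} binary trees of a fixed depth, so when you glue the two witness trees under a root labelled $x$ you must first truncate the deeper one to depth $\min(\ldim(V_0),\ldim(V_1))$. This is harmless, since every internal node of a tree with all leaves $V_b$-realizable is itself $V_b$-realizable, so truncation preserves the property. With that remark your SOA analysis goes through verbatim.
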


If $H$ is a hypothesis class, then for $x\in \mathbb{N}$ and $b\in\{0, 1\}$, by $H^x_b$ we denote the class $\{f\in H\mid f(x) = b\}$.

\begin{prop}[\cite{littlestone1988learning}]
\label{prop_less}
    For any hypothesis class $H$ of finite positive Littlestone dimension, and for every $x\in \mathbb{N}$, either $H^x_0$ or $H^x_1$ have smaller Littlestone dimension than $H$. 
\end{prop}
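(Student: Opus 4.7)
The plan is to argue by contradiction. Suppose $\ldim(H) = d \geq 1$, and suppose toward a contradiction that there is some $x \in \mathbb{N}$ with $\ldim(H^x_0) \geq d$ and $\ldim(H^x_1) \geq d$ simultaneously. I will show this lets us build a depth-$(d+1)$ Littlestone tree all of whose leaves are $H$-realizable, contradicting $\ldim(H) = d$.

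First I would unpack the definition. Since $\ldim(H) = d$ is \emph{minimal}, the value $d-1$ fails to satisfy the defining property, so there is a depth-$d$ Littlestone tree in which every leaf is realizable; more generally, $\ldim(H') \geq k$ means there is a depth-$k$ Littlestone tree in which every leaf is $H'$-realizable. Applying this to $H^x_0$ and $H^x_1$ gives depth-$d$ Littlestone trees $T_0$ and $T_1$ whose leaves are realizable by $H^x_0$ and $H^x_1$ respectively.

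Next I would glue them: form a new tree $T$ of depth $d+1$ whose root is labeled $x$, with the $0$-edge leading into the root of $T_0$ and the $1$-edge leading into the root of $T_1$. Each leaf $\ell$ of $T$ lies in one of the two subtrees, say $T_0$ (the other case is symmetric). The sample along the root-to-$\ell$ path in $T$ is $(x,0)$ followed by the sample along the root-to-$\ell$ path in $T_0$; by construction of $T_0$ the latter is consistent with some $f \in H^x_0$, and since $f(x) = 0$ by definition of $H^x_0$, the full sample is consistent with $f \in H$. Hence every leaf of $T$ is $H$-realizable, contradicting $\ldim(H) = d$.

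There is no real obstacle here; the argument is essentially just the observation that a Littlestone tree of depth $d+1$ is naturally the amalgamation of two depth-$d$ Littlestone trees glued under a common root labeled by $x$ with branching bits $0$ and $1$. The only thing to be careful about is translating the ``minimal $d$'' formulation of $\ldim$ into the existence of a fully realizable depth-$d$ tree, which is what allows the construction of $T_0$ and $T_1$ in the first place.
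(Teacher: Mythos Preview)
Your argument is correct and is the standard proof of this fact. Note that the paper does not actually prove this proposition---it simply cites it from Littlestone~\cite{littlestone1988learning}---so there is nothing to compare against beyond observing that your contradiction-via-gluing argument is exactly the classical one.
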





For a sample $S$, a \emph{cylinder}, induced by $S$, is the set of functions  $f\colon\mathbb{N}\to\{0,1\}$, consistent with $S$. Unions of cylinders induce on $\{0,1\}^\mathbb{N}$ a topology, homeomorphic to the Cantor space. Cylinders are clopen in this topology. We use a well-known fact that the Cantor space is compact.

A subset of $\{0,1\}^\mathbb{N}$ is \emph{effectively open} if it is a union of an enumerable set of cylinders. A  subset of $\{0,1\}^\mathbb{N}$ is \emph{effectively closed} if the complement to it is effectively open.

\begin{prop}
\label{prop_enum}
Let $X\subseteq \{0,1\}^\mathbb{N}$ be effectively open. Then the set of cylinders $C$ that are subsets of $X$ is enumerable.
\end{prop}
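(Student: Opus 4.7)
The plan is to combine the defining enumeration of $X$ with compactness of the Cantor space. Since $X$ is effectively open, I fix a computable enumeration $S_0, S_1, \ldots$ of samples such that $X = \bigcup_{i \ge 0} [S_i]$, where I write $[S]$ for the cylinder induced by a sample $S$. My goal is to computably enumerate those samples $S$ for which $[S] \subseteq X$.

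The key structural observation is that, because $[S]$ is clopen and hence compact, the inclusion $[S] \subseteq X$ holds if and only if there is a \emph{finite} set $F \subseteq \mathbb{N}$ with $[S] \subseteq \bigcup_{i \in F} [S_i]$. The ``if'' direction is immediate; the ``only if'' direction is compactness of $[S]$ applied to the open cover $\{[S_i]\}_{i \ge 0}$.

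This reduces the task to deciding, for a given sample $S$ and a given finite index set $F$, whether $[S] \subseteq \bigcup_{i \in F} [S_i]$. To do this, I would let $D$ be the finite set of all natural numbers that appear as first coordinates in $S$ or in any $S_i$ with $i \in F$. Every function in $[S]$ is determined on $D$ by some extension of $S$ to an assignment on $D \setminus \mathrm{dom}(S)$, and there are only finitely many such extensions (at most $2^{|D|}$ of them). The inclusion then holds iff every such extension is consistent with $S_i$ for at least one $i \in F$, which is a finite combinatorial check.

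With this in hand, the enumeration is straightforward: iterate over all pairs $(S, F)$ of a sample and a finite index set (both sets being enumerable), run the decidable check, and output $S$ whenever the check succeeds. By the structural observation, every cylinder $[S]$ contained in $X$ is eventually output, and conversely every $S$ we output satisfies $[S] \subseteq X$. I do not foresee a real obstacle beyond carefully spelling out the compactness step and the decidability of the finitary inclusion; everything else is bookkeeping.
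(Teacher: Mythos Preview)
Your proposal is correct and follows essentially the same approach as the paper: use compactness to reduce $[S]\subseteq X$ to a finite subcover, then enumerate all $(S,F)$ and check the finitary inclusion. The paper's version is terser (it uses initial segments $\{1,\ldots,N\}$ rather than arbitrary finite $F$ and leaves the decidability of the inclusion check implicit), but the idea is identical.
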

\begin{proof}
There exists a computable enumeration $\{C_n\}_{n = 1}^\infty$ of cylinders such that $X = \bigcup_{n = 1}^\infty C_n$. We enumerate all cylinders $C$  for which there exists $N\in\mathbb{N}$ such that $C\subseteq  \bigcup_{n = 1}^N C_n$. By compactness, since every cylinder is closed, in this way we will enumerate all cylinders $C$ such that $C\subseteq \bigcup_{n = 1}^\infty C_n$.
\end{proof}

\section{Effective threshold dimension}
\label{sec_threshold}

Let $t \in \mathbb{N}$ and $(x_1, \ldots, x_t)\in\mathbb{N}^t$ be a sequence of $t$ natural numbers. For $i = 1,\ldots, t$, the \emph{$i$th threshold}  on $(x_1, \ldots, x_t)$ is a sample:
\[(x_1,0)\ldots (x_{i-1}, 0) (x_i,1)\ldots (x_t, 1).\]

.
The \emph{threshold dimension} of a hypothesis class $H$, denoted by $\tdim(H)$,
is the largest natural number $t$ for which there exists a sequence $(x_1, \ldots, x_t)\in\mathbb{N}^{t}$ such that for all $i = 1, \ldots, t$, the $i$th threshold on $(x_1, \ldots, x_t)$ is $H$-realizable.

Shelah~\cite{Shelah} have shown that a class has finite Lilttlestone dimension if and only it has finite threshold dimension.  Hodges~\cite{hodges1997shorter} and Alon et al.~\cite{Alon-et-al} have shown the following quantitive version of the Shelah's result.
\begin{thm}[\cite{hodges1997shorter, Alon-et-al}] \label{thm:classical-thresholds}
\label{thm_shelah}
     For any hypothesis class $H$, we have: 
    \begin{enumerate}
        \item $\lfloor\log_2\ldim(H)\rfloor \le\tdim(H)$;
        \item $\lfloor\log_2\tdim(H)\rfloor \le\ldim(H)$.
    \end{enumerate}
\end{thm}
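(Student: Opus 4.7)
My plan is to prove the two inequalities separately. The inequality $\lfloor \log_2 \tdim(H) \rfloor \leq \ldim(H)$ is a straightforward binary-search construction, while $\lfloor \log_2 \ldim(H) \rfloor \leq \tdim(H)$ is Shelah's classical theorem in quantitative form and is the main obstacle; for it I would invoke the combinatorial arguments of~\cite{hodges1997shorter, Alon-et-al}.

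For the easier direction, set $t = \tdim(H)$ and $k = \lfloor \log_2 t \rfloor$, so that $2^k \leq t$. Fix a witness sequence $(x_1, \ldots, x_t)$ and functions $f_1, \ldots, f_{2^k} \in H$ with $f_i$ realizing the $i$-th threshold. I would build a complete depth-$k$ Littlestone tree whose $2^k$ leaves are in bijection with $\{1, \ldots, 2^k\}$ via binary search on the index: label the root with $x_{2^{k-1}}$, route the indices $i \leq 2^{k-1}$ into the $1$-subtree and $i > 2^{k-1}$ into the $0$-subtree, and recurse. Along any root-to-leaf path, the label/bit pairs seen are precisely the restriction of the corresponding threshold to those labels, so the path is consistent with $f_i$ for the unique index $i$ to which binary search converges. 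All $2^k$ leaves are therefore $H$-realizable, giving $\ldim(H) \geq k$.

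For the harder direction, let $d = \ldim(H)$ and fix a depth-$d$ Littlestone tree $T$ with every branch $H$-realizable; I want a sequence of length $k = \lfloor \log_2 d \rfloor$ whose $k$ thresholds are all realizable. My plan would be induction on $k$: every depth-$2^k$ Littlestone tree with all branches realizable yields a length-$k$ threshold witness in $H$. The base case $k = 0$ is trivial. The obstacle in the inductive step is that applying the hypothesis to the two root subtrees independently produces length-$k$ threshold witnesses on disjoint test points realized by unrelated functions, which cannot be joined directly into a pattern of length $k+1$.

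To overcome this I would follow Hodges and Alon et al.\ and strengthen the inductive invariant, additionally tracking, on each side, realizing functions that agree on a common designated point so that, when passing to the parent tree, this point can be prepended to the staircase to yield a pattern of length $k+1$. The delicate bookkeeping that pins down exactly the $\lfloor \log_2 d \rfloor$ quantitative bound is already the content of~\cite{hodges1997shorter, Alon-et-al}, so my plan would be to reduce the statement cleanly to their combinatorial lemma rather than rederive it from scratch. The main difficulty therefore lies entirely in this coordination of test points across sibling subtrees at every level of the induction.
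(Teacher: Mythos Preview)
The paper does not supply its own proof of this theorem; it is quoted as a known result attributed to \cite{hodges1997shorter, Alon-et-al} and used as a black box (it feeds into Theorem~\ref{thm_effective_shelah} and Corollary~\ref{cor:efff-thresholds}). So there is nothing in the paper to compare your argument against.

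That said, your proposal is sound. The binary-search construction for $\lfloor\log_2\tdim(H)\rfloor\le\ldim(H)$ is correct and standard: with $f_i(x_j)=1$ iff $j\ge i$, querying the midpoint $x_m$ separates $\{i\le m\}$ from $\{i>m\}$ exactly along the $1$/$0$ edges, and after $k=\lfloor\log_2 t\rfloor$ levels every leaf is realized by some $f_i$. For the converse you correctly identify the real difficulty (coordinating witnesses across sibling subtrees so the staircases glue) and defer to the cited references, which is precisely what the paper itself does for the entire statement. In effect your write-up already contains strictly more than the paper offers here.
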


We demonstrate that a hypothesis class $H$ has finite effective Littlestone dimension if and only if it has finite \emph{effective threshold dimension}. Here, the effective threshold dimension of $H$, denoted by $\etdim(H)$ is the minimal $t \ge 0$ for which there exists a total Turing machine $w$ which, having on input a sequence $(x_1,\ldots, x_{t+1})\in\mathbb{N}^{t + 1}$, outputs some $i\in\{1, \ldots, t + 1\}$ such that  the $i$th threshold on $(x_1, \ldots, x_{t + 1})$ is not $H$-realizable.

In fact, we show that any upper bound on the Littlestone dimension by the threshold dimension, and vice versa, extends to the effective versions of these dimensions. More precisely, the following theorem holds.

\begin{thm}
\label{thm_effective_shelah}
    \begin{itemize}
        \item  (a) for $d\in\mathbb{N}$, let $t_d$ denote the maximal possible threshold dimension of a hypothesis class with Littlestone dimension at most $d$; then any hypothesis class $H$ with effective Littlestone dimension $d$ has effective threshold dimension at most $t_d$.
        \item (b) for $t\in\mathbb{N}$, let $d_t$ denote the maximal possible Littlestone dimension of a hypothesis class with threshold dimension at most $t$; then any hypothesis class $H$ with effective threshold dimension $t$ has effective threshold dimension at most $d_t$ 
    \end{itemize}
\end{thm}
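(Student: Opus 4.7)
Both statements are proved by algorithmic reductions that effectivize the two halves of Theorem~\ref{thm:classical-thresholds}.

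For part (a), I would start from a total Turing machine $M_L$ witnessing $\eldim(H)\le d$ and construct an effective‑threshold‑dimension machine as follows. On input $(x_1,\ldots,x_{t_d+1})\in\mathbb{N}^{t_d+1}$, build a depth‑$(d+1)$ Littlestone tree $T^\ast$ as the complete binary‑search tree over the input: the root is labeled by the median entry; the $0$‑subtree is built recursively on the upper half of the index range, the $1$‑subtree on the lower half. From the upper bound $t_d\le 2^{d+1}-1$ of Theorem~\ref{thm:classical-thresholds}, together with the matching lower bound witnessed by the class of all threshold functions on $\{1,\ldots,2^{d+1}-1\}$ (which has Littlestone dimension $d$, being of cardinality strictly less than $2^{d+1}$), one obtains $t_d+1=2^{d+1}$, so $T^\ast$ is perfect of depth $d+1$, and its $2^{d+1}$ leaves are in bijection with the $t_d+1$ thresholds on the input: the branch sample of each leaf is a subset (as a set of labeled pairs) of the sample of the corresponding threshold. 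Calling $M_L$ on $T^\ast$ returns a non‑realizable leaf $v^\ast$, and since replacing a sample by a supersample preserves non‑realizability, the corresponding threshold is non‑realizable; we output its index.

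For part (b), symmetrically, given $M_T$ witnessing $\etdim(H)\le t$, I would build a machine for effective Littlestone dimension that, on input a Littlestone tree $T$ of depth $d_t+1$, effectively extracts from $T$ a sequence $(y_1,\ldots,y_{t+1})$ of labels appearing in $T$ together with a computable map $\phi$ sending each threshold index to a leaf of $T$, in such a way that the pair set of the $i$‑th threshold on $(y_1,\ldots,y_{t+1})$ is contained in the branch sample of $\phi(i)$. Feeding $(y_1,\ldots,y_{t+1})$ to $M_T$ returns a non‑realizable threshold index $i^\ast$, and by monotonicity of realizability $\phi(i^\ast)$ is a non‑realizable leaf of $T$, which we output. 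Note that the inclusion $|T_i|=t+1\le d_t+1=|S_{\phi(i)}|$ needed for this to make sense is guaranteed by the lower bound $d_t\ge t$, witnessed by the class of all subsets of $\mathbb{N}$ of cardinality at most $t$.

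The \textbf{main obstacle} is realizing the extraction in (b). Unlike in (a), where a sequence directly induces a binary‑search tree, an arbitrary Littlestone tree may label siblings with distinct naturals, so in general there is no canonical sequence of labels along a single branch whose thresholds all embed into branches of $T$. The reduction must therefore be made adaptive: enumerate candidate sequences built from the labels appearing in $T$, together with candidate assignments of leaves to threshold indices; feed each candidate sequence to $M_T$; and whenever the returned index $i^\ast$ satisfies the pair‑set inclusion $T_{i^\ast}\subseteq S_{\phi(i^\ast)}$ for the current assignment, output the corresponding leaf. The combinatorial content of Theorem~\ref{thm:classical-thresholds}, applied to the class of all functions consistent with at least one branch of $T$, should guarantee that such a successful candidate must exist, ensuring termination of the enumeration with a non‑realizable leaf of $T$.
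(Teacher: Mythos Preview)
Your argument for (a) is correct and genuinely different from the paper's. You exploit the exact value $t_d=2^{d+1}-1$ to build a single binary-search Littlestone tree whose $2^{d+1}$ leaves embed, as pair-sets, into the $t_d+1$ thresholds; one call to $M_L$ then suffices. The paper instead runs $M_L$ on \emph{every} depth-$(d{+}1)$ Littlestone tree with labels from $\{x_1,\dots,x_{t_d+1}\}$, collects the returned non-realizable samples, forms the finite class $\widehat{H}$ of all $\{0,1\}$-valued functions on this label set avoiding those samples, observes that $\widehat{H}$ has Littlestone dimension $\le d$ and hence threshold dimension $\le t_d$ by the definition of $t_d$, and then brute-forces a non-$\widehat{H}$-realizable threshold. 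Your route is more explicit and uses a single oracle call; the paper's route is value-agnostic (it never needs to know what $t_d$ actually is) and, crucially, is symmetric between (a) and (b).

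Your plan for (b), however, has a real gap. The condition you try to enforce---that the $i$-th threshold sample be a \emph{subset} of the branch sample of $\phi(i)$---is strictly stronger than what the classical Shelah bound gives you. Applying Theorem~\ref{thm:classical-thresholds} to ``the class of all functions consistent with at least one branch of $T$'' yields a sequence on which every threshold is \emph{realizable} by that class, i.e.\ each threshold is \emph{consistent with} some branch sample; it does not say each threshold is \emph{contained in} a branch sample. Consistency of two samples on overlapping positions does not imply inclusion, so from a non-realizable threshold you cannot in general deduce a non-realizable leaf this way. Your adaptive enumeration therefore has no termination guarantee: nothing prevents $M_T$ from always returning, on each candidate sequence, a threshold whose pair-set fails to sit inside any branch.

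The paper closes (b) by the same $\widehat{H}$-trick used in (a): let $D_T$ be the labels occurring in $T$, run $M_T$ on \emph{all} $(t{+}1)$-sequences over $D_T$, and let $\widehat{H}$ be the functions on $D_T$ avoiding every threshold thus returned. Then $\widehat{H}\supseteq H|_{D_T}$, $\tdim(\widehat{H})\le t$ by construction, hence $\ldim(\widehat{H})\le d_t$, so the depth-$(d_t{+}1)$ tree $T$ has a non-$\widehat{H}$-realizable leaf, found by brute force, which is automatically non-$H$-realizable. This avoids any embedding requirement and is the missing idea in your part (b).
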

\begin{proof}
Let us show \emph{(a)}. Let $H$ be a class of effective Littlestone dimension $d$. We show that its effective threshold Littlestone dimension is at most $t_d$. We have a total turing machine $A$ that, given a Littlestone tree $T$ of depth $d +1$, outputs a leaf of $T$ which is not $H$-realizable. We convert $A$ into a Turing machine $w$ that, given a sequence $\overline{x} = (x_1,\ldots, x_{t_d + 1})\in\mathbb{N}^{t_d + 1}$, outputs some $i \in\{1, \ldots, t_d + 1\}$ such that the $i$th threshold on $\overline{x}$  is not $H$-realizable.

The machine $w$ starts by calculating a list $T_1,\ldots, T_m$ of all Littlestone trees of depth $d +1$ where node labels are taken from the set $\{x_1,\ldots, x_{t_d + 1}\}$. Then the machine runs $A$ on all of these Littlestone trees. Each time $A$ outputs a leaf in some of these trees, it writes down the sample, written on the path to this leaf. Let $S_1, \ldots, S_m$ be the resulting list of samples. Observe that, by definition of $A$, all these samples are not $H$-realizable. The machine proceeds by constructing a set $\widehat{H}$  of functions $g\colon\{x_1, \ldots, x_{t_d + 1}\}\to\{0,1\}$ such that $g$ is not consistent with $S_\ell$ for all $\ell = 1, \ldots, m$. That set includes all restrictions of $f\in H$ to the set $\{x_1, \ldots, x_{t_d + 1}\}$. Finally, the machine goes through all $i = 1,\ldots, t_{d}+1$, checking, whether the $i$th threshold on  $(x_1,\ldots, x_{t_d + 1})$ is  $\widehat{H}$-realizable,  brute-forcing all functions in $\widehat{H}$. Whenever it finds  a not $\widehat{H}$-realizable threshold, which is also automatically not $H$-realizable, the machine outputs the corresponding $i$.

Such threshold exists because the threshold dimension of $\widehat{H}$, as of an hypothesis class over the domain $\{x_1, \ldots, x_{t_d}\}$, is at  most $t_d$. Indeed, its 
  Littlestone dimension is at most $d$  because in every Littlestone tree $T_\ell$ over the domain $\{x_1, \ldots, x_{t_d}\}$, there exists a leaf with the sample $S_\ell$ which is  not $\widehat{H}$-realizable. And by definition, the threshold dimension of a class with  Littlestone dimension at most $d$ cannot exceed $t_d$.

The statement \emph{(b)} is proved similarly. Now we have to convert a Turing machine $w$ which, given a sequence $\overline{x} = (x_1,\ldots, x_{t +1})\in\mathbb{N}^{t + 1}$, outputs a threshold on $\overline{x}$ which is not $H$-realizable, into a Turing machine $A$ that, given a depth-$(d_t + 1)$ Littlestone tree $T$, outputs a leaf in $T$ which is not $H$-realizable. We let $D_T$ be the set of all natural numbers, appearing in $T$. We go through all $(t +1)$-length sequences, consisting of numbers from $D_T$, run $w$ on them and construct the list of all threshold that it outputs (that are all not $H$-realizable). We then construct a  set $\widehat{H}$ of functions $g\colon D_T\to\{0, 1\}$ that are inconsistent with all these thresholds, and this set includes all restrictions of function from $H$ to $D_T$. We notice that, by construction, the threshold dimension of $H$ is at most $t$, which means that its Littlestone dimension is at most $d_t$. We use this to find in  $T$, which is a tree of depth $d_t +1$, a leaf which is not $\widehat{H}$-realizable. This leaf is automatically not $H$-realizable.
\end{proof}
\begin{cor} \label{cor:efff-thresholds}
     For any hypothesis class $H$, we have: 
    \begin{enumerate}
        \item $\lfloor\log_2\eldim(H)\rfloor \le\etdim(H)$;
        \item $\lfloor\log_2\etdim(H)\rfloor \le\eldim(H)$.
    \end{enumerate}
\end{cor}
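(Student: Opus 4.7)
The plan is to deduce both inequalities directly by composing Theorem \ref{thm_effective_shelah} with the classical Shelah-type bounds of Theorem \ref{thm:classical-thresholds}. The key observation is that the quantities $t_d$ and $d_t$ appearing in the statement of Theorem \ref{thm_effective_shelah} are themselves controlled by the classical inequalities of Theorem \ref{thm:classical-thresholds}, so no new combinatorial work is needed.

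For the second inequality, I would set $d = \eldim(H)$. By part (a) of Theorem \ref{thm_effective_shelah}, we have $\etdim(H) \le t_d$, where $t_d$ is the maximal threshold dimension over hypothesis classes of Littlestone dimension at most $d$. By Theorem \ref{thm:classical-thresholds}(2) applied to a class $H'$ realizing $t_d$, we get $\lfloor\log_2 \tdim(H')\rfloor \le \ldim(H') \le d$, whence $\lfloor\log_2 t_d\rfloor \le d$. Chaining these gives
\[ \lfloor\log_2 \etdim(H)\rfloor \le \lfloor\log_2 t_d\rfloor \le d = \eldim(H). \]

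The first inequality is proved by the symmetric argument: set $t = \etdim(H)$, apply part (b) of Theorem \ref{thm_effective_shelah} to obtain $\eldim(H) \le d_t$, and then use Theorem \ref{thm:classical-thresholds}(1) applied to a class realizing $d_t$ to conclude $\lfloor\log_2 d_t\rfloor \le t$. Composition then yields $\lfloor\log_2 \eldim(H)\rfloor \le \etdim(H)$.

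There is no real obstacle: the heavy lifting has already been absorbed into Theorem \ref{thm_effective_shelah}, and Theorem \ref{thm:classical-thresholds} guarantees that $t_d$ and $d_t$ are finite whenever $d$ and $t$ are, so that the logarithms make sense. The infinite cases are automatic, since finite effective threshold dimension forces finite effective Littlestone dimension (and vice versa) via Theorem \ref{thm_effective_shelah}, so that an infinite value on one side forces an infinite value on the other, making both inequalities vacuously true.
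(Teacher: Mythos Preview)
Your proposal is correct and is exactly the intended derivation: the paper states the corollary without proof, and your argument—plugging the classical bounds of Theorem~\ref{thm:classical-thresholds} into the quantities $t_d$ and $d_t$ of Theorem~\ref{thm_effective_shelah}—is precisely how it is meant to be read off. Your handling of the infinite cases via the contrapositive of Theorem~\ref{thm_effective_shelah} is also the right way to dispatch them.
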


\section{Effective Littlestone dimension vs.~computable online learning}
\label{sec_learning}
\begin{prop}
\label{prop_total_implies}
For any hypothesis class $H$ and for any $d$, we have the following. If $H$
admits a total computable online learner which makes at most $d$ mistakes, then the effective Littlestone dimension of $H$ is at most $d$.
\end{prop}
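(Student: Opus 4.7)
The plan is to build, from a total computable online learner $L$ for $H$ with at most $d$ mistakes, a total Turing machine $A$ that on input a depth-$(d+1)$ Littlestone tree $T$ returns a leaf whose root-to-leaf sample is not $H$-realizable. The idea is the standard ``learner versus adversary'' trick: use the tree $T$ as the adversary and $L$ as the player, and always walk down $T$ along the child that contradicts $L$'s current prediction.

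Concretely, $A$ maintains a sample $S$, initially empty, and a current node $v$, initially the root of $T$. While $v$ is not a leaf, it reads the label $x \in \mathbb{N}$ at $v$, computes $b := L(S, x)$ (which halts, since $L$ is total), appends $(x, 1-b)$ to $S$, and moves to the child of $v$ along the edge labeled $1-b$. After $d+1$ iterations the current node is a leaf, and $A$ outputs it. Since $L$ is computable and each iteration performs finitely many steps, $A$ is a total Turing machine.

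To see that the output leaf is not $H$-realizable, note that by construction the sample $S$ written on the root-to-leaf path has length $d+1$, and at each of its prefixes the next pair was chosen precisely to disagree with $L$'s prediction. Hence $L$ makes $d+1$ mistakes on $S$. If $S$ were $H$-realizable, this would contradict the assumption that $L$ is an online learner for $H$ with at most $d$ mistakes. Therefore $S$ is not $H$-realizable, which by the definition of realizability of a node in a Littlestone tree means the chosen leaf is not $H$-realizable, as required.

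There is essentially no serious obstacle here: the whole argument is a direct simulation, and the mistake bound of $L$ is exactly what forces non-realizability of the produced leaf. The only point worth being careful about is verifying that $A$ is \emph{total}, which is why totality of $L$ (rather than merely definedness on $H$-realizable samples) is used: $A$ may feed $L$ samples that are not $H$-realizable, and we need these queries to halt so that $A$ can output a leaf on every input tree $T$.
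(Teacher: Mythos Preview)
Your proof is correct and follows essentially the same approach as the paper's: both simulate the learner against the tree, descending at each step to the child contradicting the learner's prediction, and use the $d$-mistake bound to conclude the resulting leaf is not $H$-realizable. Your additional remark explaining why totality (and not just definedness on realizable samples) of $L$ is needed is a nice clarification that the paper leaves implicit.
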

\begin{proof}
Let $L$ be a total computable online learner for $H$ with at most $d$ mistakes.
Given a $(d+1)$-depth Littlestone tree $T$, we find a leaf of it on which $L$ makes $d+1$ mistakes. Namely, we give $L$ the number from the root, wait for its prediction (since $L$ is total, we will receive it), go to the child which contradicts this prediction, give the number from this child, and so on. The sample on the path to this leaf cannot by $H$-realizable because $L$ makes at most $d$ mistakes on $H$-realizable samples.
\end{proof}

Main result of this section is that the converse of this proposition is false already for $d = 2$ (although, as we will see later, it is true for $d = 1$).
\begin{thm}
\label{thm_counterexample}
    There exists a class $H$ of effective Littlestone dimension 2 which, for all $d$, does not have a partial computable online learner with at most $d$ mistakes.
\end{thm}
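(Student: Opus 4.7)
The plan is to construct the class $H$ explicitly as a family of support-at-most-$2$ characteristic functions, tailored by a stage-based diagonalization against all partial computable Turing machines. Concretely, I would set
\[
H = \{0^\infty\} \cup \{\chi_{\{a\}} : a \in A\} \cup \{\chi_{\{a, b\}} : (a, b) \in R\},
\]
for a set $A \subseteq \mathbb{N}$ of admissible singletons and a set $R \subseteq \{(a, b) : a \neq b\}$ of admissible pairs, both built in stages. Since every function in $H$ has support of size at most $2$, the bound $\eldim(H) \le 2$ follows from a direct argument on depth-$3$ Littlestone trees: inspect the labels $u_1, u_2, u_3$ of the three non-leaf nodes along the all-ones path from the root. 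If they are pairwise distinct, then the leaf at the end of this path has sample $(u_1, 1)(u_2, 1)(u_3, 1)$, which cannot be realized by any function of support $\le 2$; if $u_i = u_j$ for some $i < j$, then the leaf obtained by replacing the $j$-th $1$-edge with a $0$-edge has a sample containing both $(u_i, 1)$ and $(u_i, 0)$, trivially inconsistent. In either case, a non-realizable leaf is identified from the labels alone. The matching lower bound $\eldim(H) \ge 2$ is secured by arranging that $R$ contains at least one pair $(a, b)$ with $a \neq b$: the depth-$2$ tree with root labelled $a$, left child labelled by a fresh $c \in A$, and right child labelled $b$ has all four leaves realizable, so $\ldim(H) = 2$.

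The heart of the proof is the diagonalization that builds $A$ and $R$. I enumerate the partial Turing machines as $L_0, L_1, \ldots$ and, for each pair $(e, d) \in \mathbb{N}^2$, assign a pairwise disjoint infinite reservoir $Z_{e, d} \subseteq \mathbb{N}$ of fresh integers; the stage handling $(e, d)$ only adds elements of $Z_{e, d}$ to $A$ and pairs from $Z_{e, d}^2$ to $R$. Within stage $(e, d)$, a sub-routine adversarially queries $L_e$ on fresh points from $Z_{e, d}$: whenever $L_e$ predicts $1$ on a fresh $z$, the sub-routine appends $(z, 0)$ to the sample, which costs $L_e$ a mistake without breaking realizability (by the current target — initially $0^\infty$, later a committed singleton or pair). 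At controlled pivot points the sub-routine commits a $(a, 1)$ or $(b, 1)$ label — each such commitment being itself a forced mistake if $L_e$ predicted $0$ there — and in the second such case also enrolls $(a, b)$ into $R$ so that the extended sample is realized by $\chi_{\{a, b\}}$. The goal is to produce an $H$-realizable sample on which $L_e$ makes more than $d$ mistakes, or else to witness $L_e$ diverging on a realizable input, thereby refuting $L_e$ as a $d$-mistake learner.

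The main obstacle — and the delicate technical core — is verifying that the sub-routine really can force more than $d$ mistakes despite the support-$\le 2$ restriction, which caps the number of $1$-labels in any realizable sample at $2$. This means that only two ``pivot'' mistakes are available by committing $1$-labels; the remaining mistakes must be extracted from $L_e$'s $1$-predictions on fresh inputs. The key ingredient is Lemma~\ref{lemma_cons}: for every target $f \in H$, the hypothesis $L_{e, S}$ must coincide with $f$ on some $f$-consistent sample $S$. Applied to the singletons and pairs in $H$, this forces $L_e$'s predictions to be nearly right for each committed target — but because the sub-routine chooses \emph{which} singleton $a$ and which pair $(a, b)$ to commit only \emph{after} inspecting $L_e$'s behavior on the reservoir, a computable $L_e$ must err on many of the adversary's late-committed choices, contributing additional mistakes. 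Arranging the phase schedule so that these mistakes accumulate on a single realizable sample to exceed $d$, while simultaneously maintaining the structural invariants ensuring $\eldim(H) = 2$ across all stages $(e, d)$, is the main combinatorial work of the proof; pairwise disjointness of the reservoirs guarantees that requirements for different $(e, d)$ do not interfere.
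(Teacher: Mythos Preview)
Your construction cannot work, for a reason that is independent of how cleverly you build $A$ and $R$: \emph{every} class consisting solely of functions with support at most $2$ is learned with at most $2$ mistakes by the trivial total computable learner
\[
L(S,x)=\begin{cases}1 & \text{if }(x,1)\text{ occurs in }S,\\ 0 & \text{otherwise.}\end{cases}
\]
This learner errs only when a new $1$-label is revealed, and any $H$-realizable sample contains at most two distinct positions labelled $1$. Your diagonalization sub-routine relies on extracting mistakes from ``$L_e$'s $1$-predictions on fresh inputs,'' but the learner above never predicts $1$ on a fresh input, so the sub-routine harvests nothing there; the two pivot commitments give exactly two mistakes and no more. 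The appeal to Lemma~\ref{lemma_cons} does not rescue this: for $L$ as above, $L_\emptyset=0^\infty$, $L_{(a,1)}=\chi_{\{a\}}$, and $L_{(a,1)(b,1)}=\chi_{\{a,b\}}$, so the lemma's conclusion is satisfied for every $f\in H$ after at most two mistakes, regardless of which singletons and pairs you enroll.

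The paper's construction avoids this obstruction by allowing the functions in $H$ to have \emph{unbounded} (even infinite) support. Natural numbers are partitioned into infinite blocks, one per learner $L_i$; first-type restrictions force any $f\in H$ to be identically $0$ outside a single block, and within block $i$ the diagonalization repeatedly queries $L_i$ at a fresh point $x_n$ and sets $f(x_n)=\lnot L_i(S_{n-1},x_n)$, building a single function on which $L_i$ errs infinitely often. Because the values $b_n$ may be $1$ arbitrarily many times, the ``always predict $0$'' learner is defeated. The bound $\ldim(H)\le 2$ is recovered not from a support bound but from the structural decomposition $H=\{0^\infty\}\cup\bigcup_i H_i$ with $|H_i|\le 2$, and $\eldim(H)\le 2$ is obtained by making $H$ effectively closed (an enumerable set of local restrictions) rather than by the all-ones-path trick you use. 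Your argument for $\eldim(H)\le 2$ is fine for support-$\le 2$ classes, but that very support bound is what makes the class learnable.
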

\begin{proof}




In our construction, to make sure that $H$ has effective Littlestone dimension at most 2, we establish two things: (a) $H$ has ordinary Littlestone dimension at most 2, (b) $H$ is effectively closed.


Why do (a) and (b) imply that $H$ has effective Littlestone dimension at most 2? We have to provide an algorithm that, given a depth-3 Littlestone tree $T$, gives a leaf $\ell$ of $T$ which  is not $H$-realizable. Such leaf $\ell$ exists because, by (a), the ordinary Littlestone dimension of $H$ is at most 2. Out task is to find it. For a leaf $\ell$,   let $S_\ell$ be the sample, written on the path to $\ell$. Let $C_\ell$ be the cylinder, induced by $S_\ell$. A leaf $\ell$ is not $H$-realizable if and only if $C_\ell$ is a subset of the complement to $H$. Since, $H$ is effectively closed by (b), the complement to it is effectively open. Hence, by Proposition \ref{prop_enum}, the set of cylinders that are subsets of the complement to $H$ is enumerable. We  start enumerating them until $C_\ell$ for some leaf $\ell$ of $T$ appears in this enumeration.

In our construction, we ensure effective closeness of $H$ by defining it  via an enumerable set of ``local restriction''. Each local restriction is of the form ``at this (finite) set of positions, you cannot have this combination of values''.  Thus, each local restriction is, formally, a complement to a cylinder. The class $H$ will consist of functions, satisfying all these restrictions. In other words, $H$ will be an intersection of an enumerable set of complements to the cylinders. Hence, the complement to $H$ will be a union of the corresponding  enumerable set of cylinders, as required in the definition of an effectively closed set.


Fix a computable enumeration $L_1, L_2, L_3, \ldots$ of all partial computable learners. We say that a class $H$ ``fools'' a learner $L_i$ if either
\begin{itemize}
\item (a) there is an $H$-realizable sample $S$ and $x\in\mathbb{N}$ such that $L_i(S, x)$ does not halt; 
\end{itemize}
or
\begin{itemize}
\item (b) there is a function $f\in H$ and an infinite sequence of natural numbers $\{x_n\in\mathbb{N}\}_{n = 1}^\infty$ such that, denoting $S_n = (x_1, f(x_1))\ldots (x_n, f(x_n))$, we have  that $L_i(S_{n - 1}, x_n)$ is defined but differs from $f(x_n)$ for every $n\ge 1$. In other words, $L_i$ incorrectly predicts $f$ every time on the sequence $(x_1, x_2, x_3, \ldots)$.
\end{itemize}
If $H$ fools $L_i$, then there is no $d\in\mathbb{N}$ for which  $L_i$ is an online learner for $H$ with at most $d$ mistakes. Indeed, in the  case of (a), $L_i$ is not a learner for $H$, and in  the case of (b), for every $n$ there exists an $H$-realizable sample $S_n$ on which $L_i$ makes $n$ mistakes.
We will construct $H$ that fools every $L_i$ but has Littlestone dimension 2 and is effectively closed.

\medskip

Let us start with a simpler task -- for every $i$,  we construct an effectively closed class $\widehat{H}_i$ that fools $L_i$ (but maybe not other partial computable learners). The class  $\widehat{H}_i$ will have at most 2 functions. The construction works in  (potentially infinitely many) iteration.

In the first iteration, we give $1$ to $L_i$ for prediction, on the empty sample, that is, we start computing $L_i(\text{empty}, 1)$. In parallel, we  start listing restrictions of the form ``$f(1) = f(k)$'' for $k =2, 3,$ and so on (forbidding different values at $1$ and $k$). If $L_i$ never halts, we will list all such restrictions. As the result, $\widehat{H}_i$ will consist of two constant functions. In this case, $L_i$ is fooled by not halting for some $\widehat{H}_i$-realizable sample, namely, for the empty one.

Assume now that $L_i(\text{empty}, 1)$ halts, outputting  $p_1\in \{0,1\}$. Up to this moment, we have listed 
    restrictions ``$f(1) = f(k)$'' for $k$ up to some $k_1\in\mathbb{N}$. We now add a restriction, forbidding $f(1)$ to be $p_1$. In other words, we set $f(1) = b_1 = \lnot p_1$. With this, the first iteration ends. So far, we have achieved two things. First, functions, satisfying our current restrictions, are exactly functions with $f(1) = \ldots = f(k_1) = b_1$. Second, $L_i(\text{empty}, 1)$ halts but its output is different from $b_1$.

More generally, in our construction, after $n$  iterations, for some $k_1, \ldots, k_n\ge 1$ and for some $b_1,\ldots, b_n\in\{0, 1\}$, the following requirements will be fulfilled:
\begin{itemize}
\item Setting $x_1 = 1,\,\, x_2 = x_1 + k_1, \ldots,\,\, x_n = x_{n - 1} + k_{n - 1}$ and $S_m = (x_1, b_1), \ldots, (x_m, b_m)$ for $m = 0, \ldots, n$, we have that $L_i(S_{m - 1}, x_m)$ halts but with the output, different from $b_m$, for every $m = 1,\ldots, n$.
\item functions, satisfying our current list of restrictions, are precisely functions $f$, satisfying:
\begin{align}
\label{eq_intervals}
\begin{split}
f(x_1) = &\ldots = f(x_ 1 + k_1 - 1) = b_1,\\
f(x_2) = &\ldots = f(x_ 2 + k_2 - 1) = b_2,\\
&\vdots\\
f(x_n) = &\ldots = f(x_ n + k_n - 1) = b_n.
\end{split}
\end{align}
\end{itemize}
Assuming these conditions are fulfilled after $n$ iterations, we show how to fulfil them after $n + 1$ iterations. We start computing $L_i(S_n, x_{n+1})$  for $x_{n + 1} = x_n + k_n$. In parallel, we start listing restrictions of the form ``$f(x_{n + 1}) = f(x_{n + 1} + k - 1)$'' for $k \ge 2$. If $L_i(S_n, x_{n+1})$ never halts, we will list all such restrictions. As the result, the class $\widehat{H}_i$ will consist of two functions that are defined by  \eqref{eq_intervals} on  numbers less than $x_{n+1}$ and are constant on $\{x_{n+1}, x_{n + 1} +1, x_{n  +1} + 2, \ldots\}$. Both these function are consistent with $S_n$ as $S_n$ is a part of \eqref{eq_intervals}; this fools $L_i$ as it does not halt on $(S_n, x_{n+1})$ while $S_n$ is  $\widehat{H}_i$-realizable.

Assume now that $L_i(S_n, x_{n+1})$ halts, outputting $p_{n+1}\in\{0,1\}$. Up to this point, we have listed ``$f(x_{n + 1}) = f(x_{n + 1} + k - 1)$'' restrictions for $k$ up to some $k_{n+1}$. We add a restriction ``$f(x_{n + 1}) = b_{n+1}= \lnot p_{n+1}$'' and end with this the $(n+1)$st iteration. This adds a line $f(x_{n+1}) = \ldots = f(x_{n+1} + k_{n+1} - 1) = b_{n+1}$ to \eqref{eq_intervals}, as required. Moreover, by making sure that  $L_i(S_n, x_{n+1})$ halts but outputs  $\lnot b_{n+1}$, we extend the requirement $L_i(S_{m-1}, x_m) \neq b_m$ to $m = n + 1$.

As is already observed, if some iteration never ends, $L_i$ will be fooled by not halting on some input with an $\widehat{H}_i$-realizable sample. Now, imagine that every iteration ends after finitely many steps. Then \eqref{eq_intervals} will be true for all $n$, leaving in $\widehat{H}_i$ a single function $f$, which is equal to $b_1$ on $[x_1, x_2)$, to $b_2$ on $[x_2, x_3)$, and so on. This $f\in \widehat{H}_i$, together with the sequence $(x_1, x_2, x_3,\ldots)$, will fool $L_i$. 

\medskip

We now give a single effectively closed class $H$ of Littlestone dimension at most $2$ that fools every $L_i$. We partition natural numbers into infinitely many infinite disjoint blocks in some computable way, assigning each $L_i$ one of the blocks.
We will have two kind of restrictions. First, for every pair of numbers from different blocks, we will forbid both of them having value 1, forcing every function in $H$ to have value 1 in at most one of the blocks. Restrictions of the second type will involve only numbers from the same block. Thus, for every $i$, the will be the ``$i$th block restrictions'', and their union over $i  = 1, 2, 3,...$ will be the set of second-type restrictions.  

For every $i$, we list the $i$th block restrictions in a way that
fools $L_i$ as in the construction of the class $\widehat{H}_i$, but using the set of numbers of the $i$th block instead of $\{1, 2, 3, \ldots,\}$. Additionally, we do it with one modification. As a result of this modification, the class $\widehat{H}_i$ will potentially have 3 functions. Namely,  the all-0 function will be added to the class $\widehat{H}_i$ if it was not there already.  

In more detail, 
every restriction
that we have for $L_i$,
saying ``you cannot have these values in these positions'', is turned into infinitely many restrictions, where for every $x$ from the $i$-th block, we say ``you cannot have these values in these positions and have $1$ at position $x$ simultaneously''. Any function, satisfying old restrictions, satisfies all these new restrictions because new restrictions are weaker. However, no new function, apart from the all-0 function, can be added to $\widehat{H}_i$ in this way. Indeed, any function $f$ with at least one value 1, violating some old restriction, will violate a new restriction where as $x$ we take some number on which $f$ is equal to 1. As a result, there will be at most 2 functions in $H$ that have value 1 on some number from the $i$th block.

We  ``almost'' established that $H$ fools every $L_i$. Namely, there will be a function $f_i$, defined on the $i$th block and satisfying the $i$th block restrictions, that fools $L_i$. That is, either there will be an input with a sample, consistent with this function, on which $L_i$ does not halt, or there will be an infinite sequence of numbers from the $i$th block on which $L_i$ predicts values of $f_i$ incorrectly every time. It remains to argue that $f_i$ can be extended to a function $f_i\colon\mathbb{N}\to\{0,1\}$  satisfying other restrictions, defining $H$ -- first-type restrictions and restrictions for other blocks. Namely, set $f_i(x)=0$ for all $x$ outside the $i$th block. This works because  all these other restrictions involve at least one label 1 for a number outside the $i$th block (this is why we had to modify the construction of $\widehat{H}_i$, including at least one label 1 to every restriction!).

It remains to show that the Littlestone dimension of $H$ is at most $2$. Due to the first-type restrictions, no function in $H$ can have value 1 in two distinct blocks.  Thus, $H$ can be presented as
\[H = \{\text{all-0 function}\} \cup H_1 \cup H_2 \cup H_3\ldots,\]
where $H_i$ iss the set of functions from $H$ that have value 1 on some number from the $i$th block. As we have noted, the size of every $H_i$ is at most 2.  Hence, there is an online learner for $H$ with at most 2 mistakes, implying by Proposition \ref{prop_littlestone} that $\ldim(H) \le 2$. This learner first predicts 0 on every number. If it is wrong, it is because there is a positive label in some block. This leaves the algorithm with at most 2 possible functions left. The learner first predicts according to one of them, and, in case of the second mistake, according to the second one.

\end{proof} 

\section{Equivalence in the bounded regime}
On the positive side, we consider a  modification of online learning where the learner initially gets an upper bound $N$ on the numbers it will receive for prediction. It can be arbitrarily large, but the bound on the number of mistakes $d$ should not depend on $N$. We call it \emph{online learning in the bounded regime}. 
We show that effective Littlestone dimension characterizes
computable learnability in this setting. As a corollary, we get the separation between computable online learning in the bounded and the unbounded regime. For some class, online learning with bounded number mistakes is possible when the learner gets an arbitrary bound on the numbers, but not possible without a bound.

To be precise, a  learner with an upper bound is a, possibly partial, function $L \colon\mathbb{N}\times (\mathbb{N}\times\{0,1\})^*\times \mathbb{N}\to\{0,1\}$ (compared to normal learners, it has one more input, an upper bound). AS before, we say that $L$ is a learner for a hypothesis class $H$ if $L(N,S, x)$ is defined for every $N,x\in\mathbb{N}$ and for every $H$-realizable sample $S$.

Let $L$ be a learner with an upper bound for $H$. 
We say that $L$ \emph{online learns} $H$ \emph{in the bounded regime with at most} $d$ \emph{mistakes} if, for any $N\in\mathbb{N}$ and any $H$-realizable sample $S=(x_1,y_1), \dots, (x_k,y_k)$ with the property that $x_1, \dots, x_k \le N$, there are at most $d$ numbers $i \in \{1, \dots, k\}$ such that 
$L(N,(x_1,y_1), \dots, (x_{i-1}, y_{i-1}), x_{i}) \ne y_{i}.$
\begin{prop}
    A hypothesis class $H$ has effective Littlestone dimension at most $d$ if and only if there is a total computable learner with an upper bound which online learns $H$ in the bounded regime with at most $d$ mistakes.
\end{prop}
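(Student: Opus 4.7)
The proof breaks into two directions, and the $(\Leftarrow)$ direction is a minor variant of Proposition~\ref{prop_total_implies} while the $(\Rightarrow)$ direction is the new content and relies on reducing, for each upper bound $N$, to a finite hypothesis class.

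For the $(\Leftarrow)$ direction, let $L$ be a total computable learner with upper bound that online learns $H$ in the bounded regime with at most $d$ mistakes. Given a depth-$(d+1)$ Littlestone tree $T$, set $N$ to be the largest number labeling an internal node of $T$. Walk down $T$: at each internal node labeled $x$, with $S$ the sample written on the current path, query $L(N,S,x)$ and move to the child whose incoming edge carries the bit $1-L(N,S,x)$. Totality of $L$ guarantees termination, and after $d+1$ steps one reaches a leaf whose path sample forces $L$ to make $d+1$ mistakes while using inputs bounded by $N$. The bounded-regime guarantee therefore forces this sample to be non-$H$-realizable, so the resulting leaf witnesses $\eldim(H)\le d$.

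For the $(\Rightarrow)$ direction, let $A$ be a total Turing machine witnessing $\eldim(H)\le d$. On input $(N,S,x)$ the learner proceeds as follows. If $x>N$ or some input in $S$ exceeds $N$, output $0$. Otherwise, enumerate the finite list $T_1,\dots,T_m$ of all depth-$(d+1)$ Littlestone trees whose internal nodes carry labels in $\{0,1,\dots,N\}$, and for each $T_j$ run $A$ to obtain the sample $S_j$ written on the path to a non-$H$-realizable leaf. Define $\widehat{H}$ to be the finite set of all $g\colon\{0,\dots,N\}\to\{0,1\}$ that are inconsistent with every $S_j$. Two facts follow: first, the restriction of every $f\in H$ to $\{0,\dots,N\}$ lies in $\widehat{H}$, since no $S_j$ is $H$-realizable; second, $\ldim(\widehat{H})\le d$, since any purported depth-$(d+1)$ Littlestone tree witnessing the contrary over this domain appears in the enumeration as some $T_j$, and the leaf selected by $A(T_j)$ carries the sample $S_j$, with which by construction no element of $\widehat{H}$ is consistent. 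The learner then runs Littlestone's Standard Optimal Algorithm for $\widehat{H}$ on $(S,x)$; this is a total computable procedure because $\widehat{H}$ is a finite, effectively presented class over a finite domain, and Proposition~\ref{prop_littlestone} applied to $\widehat{H}$ ensures at most $d$ mistakes on every $\widehat{H}$-realizable sample, hence on every $H$-realizable sample with inputs bounded by $N$.

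The main obstacle is the second direction, and the key trick—carving out a finite approximating class $\widehat{H}$ on the bounded domain whose ordinary Littlestone dimension inherits the effective bound on $\eldim(H)$—is structurally the same reduction used in the proof of Theorem~\ref{thm_effective_shelah}.
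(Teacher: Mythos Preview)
Your proof is correct and follows essentially the same approach as the paper's: the $(\Leftarrow)$ direction walks down the tree contradicting the learner after setting $N$ to the largest label, and the $(\Rightarrow)$ direction builds the finite class $\widehat{H}$ on $\{0,\dots,N\}$ from the non-realizable leaves output by $A$, observes that its ordinary Littlestone dimension is at most $d$, and then runs an optimal learner for it. The only cosmetic differences are that you invoke the Standard Optimal Algorithm explicitly where the paper says ``brute-force,'' and you handle the out-of-range inputs explicitly.
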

\begin{proof}
    Assume that $L$ is a computable a learner with an upper bound  which online learns $H$ in the bounded regime with at most $d$ mistakes. We show that $\eldim(H) \le d$.  Let $T$ be a Littlestone tree of depth $d+1$ where we have to output a leaf which is not  $H$-realizable. We take as $N$ the largest number, appearing in $T$, and run the same procedure as in the proof of Proposition \ref{prop_total_implies}, with $L$ having $N$ as the additional input.

    Next, assume that $H$ has effective Littlestone dimension at most $d$. Hence, there is an algorithm $A$ that, given a $(d+1)$-depth Littlestone tree $T$, outputs a leaf which is not $H$-realizable.
    We construct a learner $L$ that,  given an upper bound $N$, goes through all Littlestone trees of depth $d+1$ with node labels at most $N$, computes all samples that are indicated by $A$ in these trees, and finds the set $H_N$
    of
    all functions on the first $N$ natural numbers that are inconsistent with all these samples. The set $H_N$ includes all functions that can be continued to a function in $H$. On the other hand, the Littlestone dimension $H_N$ is at most $d$ as ``witnessed'' by $A$. The class $H_N$ is  over a finite domain, and we have a complete description of it, so we find an online learner with at most $d$ mistakes for it by the brute-force.      
\end{proof}
One could wonder whether the similar equivalence could be proven for finite classes of functions but exchanging "computable" for some of form of time-bounded computability, such as 'polynomial-time computable'. We leave this as an interesting direction for further research.

\section{Effective Littlestone dimension and computability}
\begin{thm}
\label{thm_computable}
Let $H$ be a hypothesis class with finite effective Littlestone dimension. Then all functions in $H$ are computable.
\end{thm}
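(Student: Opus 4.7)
The plan is to exploit the preceding proposition, which yields a total computable bounded-regime learner $L$ for $H$ making at most $d = \eldim(H)$ mistakes. Given $f \in H$, I would build a Turing machine $M_f$ computing $f$ with finite advice about $f$ hardcoded (permissible since computability of each individual $f$ need not be uniform). The preliminary step is a bounded-regime analog of \Cref{lemma_cons}: for every $N \in \mathbb{N}$ there is a sample $S_N$ of length at most $d$, consistent with $f$ and with entries in $\{1,\ldots,N\}$, such that $L(N, S_N, y) = f(y)$ for every $y \le N$. The argument mirrors \Cref{lemma_cons}: otherwise, one could iteratively extend $S_N$ by a disagreement point to force more than $d$ mistakes on an $H$-realizable sample with entries in $\{1,\ldots,N\}$.

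Next, by a pigeonhole argument, the sequence $(S_N)_{N \ge 1}$ lies in the countable set of samples of length at most $d$ over $\mathbb{N}\times\{0,1\}$, so some specific $S^{*}$ occurs as $S_N$ for an infinite set $\mathcal N^{*}$ of indices. I would hardcode $S^{*}$ as advice in $M_f$; on input $x$, $M_f$ would search for some $N \in \mathcal N^{*}$ with $N \ge x$ and return $L(N, S^{*}, x)$, which equals $f(x)$ for every such $N$ by the preliminary step.

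The hard part will be certifying computably that a candidate $N$ lies in $\mathcal N^{*}$, since this subsequence is defined in terms of $f$. The plan to resolve this is twofold: (i)~enrich the hardcoded advice with a sufficiently long prefix $f|_{\{1,\ldots,M\}}$ of $f$, enabling a direct check that $L(N, S^{*}, y) = f(y)$ for every $y \le M$; and (ii)~exploit the co-r.e.\ access to non-$H$-realizability that finite $\eldim$ affords, by accumulating the non-realizable samples output by the $\eldim$ algorithm on all depth-$(d+1)$ Littlestone trees (in the spirit of the effectively closed superclass $\overline{H}\supseteq H$ implicit in the proof of the preceding proposition), and thereby discarding any candidate $N$ whose simulated hypothesis $L(N, S^{*}, \cdot)|_{\{1,\ldots,N\}}$ is eventually revealed to be non-realizable. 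Showing that these two checks together suffice to pinpoint an $N \in \mathcal N^{*}$ (and therefore yield the correct value $f(x) = L(N, S^{*}, x)$), while ensuring that such an $N \ge x$ always exists, is the technical crux of the argument.
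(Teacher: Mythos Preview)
Your route is quite different from the paper's (which proceeds by induction on $\eldim(H)$, via a branching lemma showing that one of $H^x_0,H^x_1$ always drops in effective Littlestone dimension, and in the hardest subcase invokes a consistent-oracle online learner), but the plan has a genuine gap precisely where you flag the ``technical crux.'' Checks~(i) and~(ii) together do \emph{not} certify $L(N,S^*,x)=f(x)$, and no finite prefix length $M$ repairs this. Suppose $f$ is not isolated in the effectively closed superclass $\overline H$: for every $M$ there is some $g\in\overline H$, $g\neq f$, with $g|_{\{1,\ldots,M\}}=f|_{\{1,\ldots,M\}}$; in particular $g$ is consistent with $S^*$. Take $x>M$ with $g(x)\neq f(x)$ and any $N\ge x$ for which the bounded learner's hypothesis after $S^*$ coincides with $g$ on $\{1,\ldots,N\}$ (nothing in your argument, nor in the brute-force construction of $L$ in the preceding proposition, rules this out). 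This $N$ passes check~(i), since $g$ and $f$ agree below $M$, and it passes check~(ii) \emph{forever}, since $g|_{\{1,\ldots,N\}}$ is $\overline H$-realizable and will never be enumerated among the non-realizable samples. Yet $L(N,S^*,x)=g(x)\neq f(x)$. Because check~(ii) is only a co-r.e.\ filter, your dovetailing has no sound halting rule: either it commits at a finite stage to such a bad $N$, or it never halts.

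The paper meets this exact obstacle and splits into two cases. When some finite sample isolates $f$ inside (the effectively closed version of) $H$, an argument close in spirit to your checks does work. When $f$ is \emph{not} isolated---precisely the scenario above---the paper brings in a separate idea: because every $f$-consistent sample is realized by some other member of $H$, one can choose a consistent oracle that never returns $f$; feeding this oracle into the Kozachinskiy--Steifer online learner yields, by \Cref{lemma_cons}, a sample $S$ with $L_S=f$, and $L_S$ is computable since the finitely many oracle answers it uses are all computable (by the induction hypothesis applied to functions other than $f$). Your plan currently has no substitute for this non-isolated case.
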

 \begin{proof}
We establish the theorem by induction on $\eldim(H)$. When $\eldim(H) = 0$, we have an algorithm that, given a depth-1 Littlestone tree $T$, outputs a leaf of $T$ which is not $H$-realizable. In other words, for a given number $x\in\mathbb{N}$, written in the root of $T$, it indicates $b\in\{0,1\}$ such that $f(x) \neq b$ for all $f\in H$. By outputting $\lnot b$ on $x$ we obtain a program for the unique function $f\in H$.


For the induction step, we need the following lemma, which is an analog of the Proposition \ref{prop_less} for effective Littlestone dimension.
\begin{lem}
\label{lemma_less}
    For any class $H$ of finite positive effective Littlestone dimension, and for any $x\in\mathbb{N}$, either $H^x_0$ or $H^x_1$ have smaller effective Littlestone dimension that $H$. 
\end{lem}
\begin{proof}
Let $d = \eldim(H) > 0$.
    There exists an algorithm $A$ that, given a $(d+1)$-depth Littlestone tree, outputs a leaf of it which is not $H$-realizable.

    We now describe two algorithms, $A_0$ and $A_1$, and show that either $A_0$ establishes that $\eldim(H_0^x)\le d - 1$, or $A_1$ establishes that $\eldim(H_1^x) \le d - 1$. Namely, both algorithms receive on input a $d$-depth Littlestone tree (here we need a condition $d > 0$ so that the notion of ``$d$-depth trees'' makes sense). The algorithm $A_0$ is supposed to output a leaf which is not $H^x_0$-realizable. Likewise, $A_1$ is supposed to output a leaf which is not $H_x^1$-realizable. 

The algorithm $A_0$ works as follows. Let its input be a depth-$d$  Littlestone tree $T_0$. The algorithm goes over all depth-$d$ Littlestone trees $T_1$, and for each of them, does the following. It constructs a tree
 $T = (x, T_0, T_1)$, where the root is labeled by $x$, the $0$-subtree coincides with $T_0$, and the $1$-substree coincides with $T_1$. The algorithm gives this $T$ to $A$. If  $A$ outputs a leaf in the $0$-subtree of $T$, that is, inside $T_0$, the algorithm $A_0$ outputs this leaf as its answer, and halts. Otherwise, $A_0$ proceeds to the next $T_1$. 

If $A_0$ ever halts on $T_0$, then the leaf $\ell$ of $T_0$ that it outputs is not $H_0^x$-realizable. Indeed, consider the sample $S_\ell^0$, written on the path from the root of $T_0$ to $\ell$. Assume for contradiction that this sample is $H_0^x$-realizable. Then the sample $(x,0)S_\ell^0$ is $H$-realizable.  But $\ell$ is the output of $A$ on $T$, and $(x,0)S_\ell^0$ is  written on the path from the root of $T$ to $\ell$, a contradiction.


 The problem with $A_0$ is that it might not halt on some $T_0$. This happens when, for all $T_1$, the algorithm $A$ on input $T = (x, T_0, T_1)$ outputs a leaf in $T_1$. We now define the algorithm $A_1$. It  receives a depth-$d$ Littlestone tree $T_1$ on input (where it supposed to indicate a not $H^x_1$-realizable leaf), and runs $A$ on all trees of the form $(x, T_0, T_1)$, waiting until $A$ indicates a leaf in the $1$-subtree. By the same argument, whenever $A_1$ halts, its output is correct.

The only case when both algorithms fail is when there exist depth-$d$ Littlestone trees $T_0^\prime, T_1^\prime$ such that $A_0$ does not halt on $T_0^\prime$ and $A_1$ does not halt on $T_1^\prime$. This means that
$A$ goes to the $1$-subtree in all trees of the form $(x, T_0^\prime, T_1)$, and goes to the $0$-subtree in all trees of the form $(x, T_0, T_1^\prime)$. However, this means that $A$ does not output anything in the tree $(x, T_0^\prime, T_1^\prime)$, a contradiction.
\end{proof}

    Let us now finish the induction step. Assume we have a class $H$ of effective Littlestone dimension $d> 0$, and for all smaller value of effective Littlestone dimension, the theorem is already proved.

Without loss of generality, we may assume that $H$ is effectively closed. Indeed, take the algorithm $A$ that, given a $(d+1)$-depth Littlestone tree $T$, outputs a leaf of $H$ which is not $H$-realizable. Say that a function $f\colon\mathbb{N}\to\{0,1\}$ agrees with $A$ if there is no depth-$(d + 1)$ Littlestone tree $T$ on which $A$ outputs a leaf which is consistent with  $f$. All functions in $H$ agree with $A$. Consider the class $\widehat{H}\supseteq H$ of all functions that agree with $A$. The effective Littlestone dimension of $\widehat{H}$ is at most $d$ as established by the algrotihm $A$. In turn,  $\widehat{H}$ is effectively closed. Indeed, the complement to it consists of all function that are consistent with at least one leaf that $A$ outputs on  depth-$(d + 1)$ Littlestone trees. To enumerate the set of cylinders whose union is the complement to $\widehat{H}$, we go though all  depth-$(d + 1)$ Littlestone trees $T$, compute the leaf $\ell = A(T)$, and add the cylinder $C_\ell$, induced by this leaf, to the enumeration.

From now on, we assume that the class $H$ is effectively closed. 
   By Lemma \ref{lemma_less}, for every $x\in\mathbb{N}$, either $\eldim(H_0^x) < d$ or $\eldim(H_1^x) < d$. 
Assume first that for some $x\in\mathbb{N}$, we have $\eldim(H_0^x) < d$ or $\eldim(H_1^x) < d$. Then by the induction hypothesis, both $H_0^x$ and $H_1^x$ consist of computable functions. It remains to notice that $H =  H_0^x\cup H_1^x$.

Assume now that for every $x\in\mathbb{N}$, either $\eldim(H_0^x) = d$ or $\eldim(H_1^x) = d$. Consider the function $f\colon\mathbb{N}\to\{0, 1\}$, defined by $\eldim(H_{f(x)}^x) = d$ for every $x$. By Lemma \ref{lemma_less}, we have $\eldim(H_{\lnot f(x)}^x) < d$ for every $x\in\mathbb{N}$. Hence, any function $g\in H$, different from $f$, is computable, as it belongs to $H_{\lnot f(x)}^x$ for some $x\in\mathbb{N}$.  It remains to show that if $f\in H$, then it is computable.

First, consider the case when there exists a sample $S$ which is consistent with $f$ but not with any other $g\in H$. We describe an algorithm that, given $x\in\mathbb{N}$, computes $f(x)$. Observe that the sample $S (x, \lnot f(x))$  is not $H$-realizable because it is inconsistent with $f$ and $S$ is inconsistent with all the other functions in $H$. At the same time, $S (x,  f(x))$ is $H$-realizable because it is consistent with $f\in H$. Since $H$ is effectively closed, the complement to it is effectively open. By Proposition \ref{prop_enum}, the set of cylinders that are subsets of the complement to $H$, is enumerable. Hence, the set of samples that are not $H$-realizable, is enumerable. We enumerate this set until a sample of the form $S(x, y)$ for some $y\in\{0, 1\}$ appears.  Since $S (x, \lnot f(x))$  is not $H$-realizable and $S(x, f(x))$ is, we have that $y = \lnot f(x)$. We output $\lnot y = f(x)$.


 Assume now that for every sample $S$, consistent with $f$, there exists $g \in H$, different from $f$, which is also consistent with $S$. 
We  use an online learning algorithm with ``consistent oracle''~\cite{kozachinskiy2024simple, assos2023online}. A consistent oracle for a class $H$ is a mapping that, given an $H$-realizable sample $S$, outputs a function $f_S\in H$, consistent with this sample. More precisely, it gives an oracle access to $f_S$, meaning that given $S$ and $x\in\mathbb{N}$, it allows to evaluate $f_S(x)$. Kozachinskiy and Steifer\cite{kozachinskiy2024simple} constructed an algorithm that, for any class $H$ of Littlestone dimension $d$, given only access to a consistent oracle for $H$, online learns it with at most $O(256^d)$ mistakes.  This algorithm, to compute $L(S, x)$, the prediction on $x$ after the sample $S$, uses consistent oracle only for $S$ and its subsamples, making sure that it never applied to a non-$H$-realizable sample.
 
 We get back to the class $H$ in question, and we take any consistent oracle $H$ that never uses function $f$. Such oracle exists because for any sample, consistent with $f$, there exists another function in $H$, consistent with this sample.  We know that all function in $H$, apart from $H$, are computable. Hence, this consistent oracle uses only computable functions.
 
 We consider the online learner $L$ of Kozachinskiy and Steifer~\cite{kozachinskiy2024simple}, equipped with this consistent oracle. By Lemma \ref{lemma_cons}, there exists a sample $S$, consistent with $f$, such that $L_S$ coincides with $f$, that is, $L(S, x) = f(x)$ for all $x\in \mathbb{N}$. We  show that $L_S$ is computable. Indeed, in its computation, the consistency oracle is queried only for finitely many samples. It is enough to hardwire programs for the output functions of the consistency oracle on these samples.



 \end{proof}
\begin{cor}
    Let $H$ be a class of effective Littlestone dimension 1. Then it has a total computable online learner with at most 1 mistake.
\end{cor}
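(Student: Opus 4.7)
The plan is to construct a total computable online learner $L$ for $H$ with at most one mistake, using a ``predict structurally, then collapse to a singleton after one mistake'' strategy that combines Lemma~\ref{lemma_less} with the base case of Theorem~\ref{thm_computable}. By Lemma~\ref{lemma_less} applied to $\eldim(H)=1$, for every $x\in\mathbb{N}$ at least one of $H^x_0, H^x_1$ has effective Littlestone dimension $0$. Inspecting its proof, from the algorithm $A$ witnessing $\eldim(H)=1$ one can uniformly produce two candidate algorithms $A^{(x)}_0, A^{(x)}_1$ (acting on depth-$1$ Littlestone trees over $H^x_0, H^x_1$ respectively), with the guarantee that at least one is total on all inputs. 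When $A^{(x)}_b$ is total, the base case of Theorem~\ref{thm_computable} identifies the unique element of $H^x_b$, if any, as $y\mapsto 1 - A^{(x)}_b(y)$.

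The learner $L(S,x)$ I would define starts by self-simulating on the prefixes of $S$ to locate the first mistake. If no mistake has occurred, it outputs a bit $b^*(x)\in\{0,1\}$ chosen so that $A^{(x)}_{1-b^*(x)}$ is total. If a mistake has been made at some pair $(x^*, y^*)$, it outputs $1 - A^{(x^*)}_{y^*}(x)$. By the choice of $b^*$, any first mistake at $x^*$ must have $y^* = 1 - b^*(x^*)$, so $A^{(x^*)}_{y^*}$ is total by construction, making $H^{x^*}_{y^*}$ a singleton whose unique element is computable as $y\mapsto 1 - A^{(x^*)}_{y^*}(y)$. Since the true function $f$ lies in $H^{x^*}_{y^*}$, all subsequent predictions of $L$ match $f$, so $L$ makes at most one mistake on any $H$-realizable sample.

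The principal obstacle is to compute $b^*(x)$ uniformly and totally in $x$. My plan is to exploit the contradiction used at the end of the proof of Lemma~\ref{lemma_less}: writing $T_{x,y_0,y_1}$ for the depth-$2$ Littlestone tree with root $x$ and inner labels $y_0, y_1$, the two $\Sigma^0_2$ statements ``$\exists y_1^*\,\forall y_0\colon A(T_{x,y_0,y_1^*})$ lies in the $0$-subtree'' and ``$\exists y_0^*\,\forall y_1\colon A(T_{x,y_0^*,y_1})$ lies in the $1$-subtree'' cannot both hold, because instantiating at $(y_0^*,y_1^*)$ would force $A$ to return leaves in both subtrees of the same tree $T_{x,y_0^*,y_1^*}$. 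I would dovetail bounded-range approximations of these two searches and carefully stage a fallback so that in the symmetric case where both $A^{(x)}_0$ and $A^{(x)}_1$ are total --- when neither bounded approximation need terminate but any choice of $b^*(x)$ is valid --- a default commitment is made. Making this staging uniform and total in $x$ is where the bulk of the technical work lies.
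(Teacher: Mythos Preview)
Your high-level strategy---predict a default bit until the first mistake, then switch to the unique survivor---is the same as the paper's, and your post-mistake phase (predict $y\mapsto 1-A^{(x^*)}_{y^*}(y)$) is correct provided $A^{(x^*)}_{y^*}$ is total. The gap is exactly where you flag it: the computability of $b^*$.

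What you need is a total computable $b^*$ with $b^*(x)=0$ whenever $A^{(x)}_0$ is not total and $b^*(x)=1$ whenever $A^{(x)}_1$ is not total. Since $A$ itself is total, the predicate ``$A^{(x)}_b$ is not total'' has the form $\exists y\,\forall y'\,[\text{decidable}]$, so you are asking for a computable separator of two disjoint $\Sigma^0_2$ sets built from $A$. Your suggested dovetailing of the bounded approximations $\exists y\le N\,\forall y'\le N[\cdots]$ cannot work: these conditions are not monotone in $N$, both may hold for the same $N$, and no finite stage certifies the unbounded $\forall$, so there is no sound moment at which to commit or to trigger the ``both total'' fallback. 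Nothing about the particular shape of $A^{(x)}_0,A^{(x)}_1$ rescues this; one can write down total witnesses $A$ on depth-$2$ trees (hence bona fide classes $\widehat H$ with $\eldim(\widehat H)\le 1$) for which the induced sets $\{x:A^{(x)}_0\text{ not total}\}$ and $\{x:A^{(x)}_1\text{ not total}\}$ form a recursively inseparable pair, so no computable $b^*$ exists for that $A$.

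The paper does not try to read the default prediction off $A$. It (i) replaces $H$ WLOG by an effectively closed class; (ii) handles finite $H$ directly via Theorem~\ref{thm_computable} and brute force; (iii) for infinite $H$, defines the default $f$ by the \emph{classical} condition $\ldim(H^x_{f(x)})=1$, shows $f\in H$ using closedness, and then invokes Theorem~\ref{thm_computable} to conclude that $f$ is computable; (iv) after a mistake, uses effective closedness to enumerate non-$H$-realizable extensions (rather than relying on totality of some $A^{(x^*)}_{y^*}$), which terminates because $|H^{x}_{\lnot f(x)}|\le 1$. The heavy lifting is Theorem~\ref{thm_computable}; your plan is essentially an attempt to bypass it, and that is precisely where it breaks.
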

\begin{proof}
   Assume first that $H$ is finite. By Theorem \ref{thm_computable}, all finitely many functions of $H$ are computable. In this case, we can realize the standard optimal algorithm of Littletone~\cite{littlestone1988learning} by a total Turing machine. In case when $\ldim(H) = 1$, it works like this: given $x\in \mathbb{N}$, it takes $b\in\{0, 1\}$ such that $\ldim(H^x_b) = 0$ (existing by Proposition \ref{prop_less})and predicts $\lnot b$ so that when it is wrong, we are in $H^x_b$ where there is exactly one function. To realize this algorithm by a total Turing machine, we need to be able to decide, whether a sample is realizable, and whether it is realizable by exactly one function from $H$. We can achieve this by evaluating all functions from $H$ on the numbers from the sample.
   
From now on we assume that $H$ is infinite. We may also assume that $H$ is effectively closed, by the same argument as in the proof of Theorem \ref{thm_computable}.

First, observe that there is no $x\in\mathbb{N}$ such that $\ldim(H^x_0) = \ldim(H^x_1) = 0$ because otherwise $H =H^x_0 \cup H^x_1$ has size at most 2. Therefore, we can define a function $f\colon\mathbb{N}\to\{0, 1\}$ by setting $f(x)$ such that $\ldim(H^x_{f(x)}) = 1$. We claim that this function belongs to $H$. Indeed, if not, since $H$ is closed, some  sample 
\[S = (x_1, f(x_1))\ldots(x_k, f(x_k))\]
is consistent with $f$ but not $H$-realizable. But then $H = H^{x_1}_{\lnot f(x_1)} \cup \ldots \cup H^{x_k}_{\lnot f(x_k)}$. By the definition of $f$, we have $\ldim(H^{x_i}_{\lnot f(x_i)}) = 0$ for every $i = 1,\ldots, k$. Hence, in $H$ there are at most $k$ function, so it cannot be infinite. 

Therefore, $f\in H$ and hence is computable by Theorem \ref{thm_computable}. We give a total computable online learner $L$ for $H$ with at most 1 mistake, working as follows. Given a sample $S$ and $x\in\mathbb{N}$, we define $L(S, x) = f(x)$ if $S$ is consistent with $f$ (this can be checked computably since $f$ is computable). If $S$ is not consistent with $f$, we start enumerating samples that are not $H$-realizable, using Proposition \ref{prop_enum} and the fact that $H$ is effectively closed. Whenever a sample of the form $S(x,y)$ for some $y\in\{0, 1\}$ appears, we output $L(S, x) = \lnot y$. 

This learner makes at most 1 mistake on $H$-realizable sample. Namely, it can make a mistake only when the first pair, inconsistent with $f$, appears. In turn,
this learner is total. Indeed, whenever $S$ is not consistent with $f$, there is a pair of the form $(x,\lnot f(x))$ in $S$ for some $x\in\mathbb{N}$. It is impossible for both $S (x, 0)$ and $S(x, 1)$ to be $H$-realizable because $H_{\lnot f(x)}^x$ has at most 1 function.

\end{proof}


\begin{thebibliography}{10}

\bibitem{agarwal2020learnability}
{\sc Agarwal, S., Ananthakrishnan, N., Ben-David, S., Lechner, T., and Urner,
  R.}
\newblock On learnability wih computable learners.
\newblock In {\em Algorithmic Learning Theory\/} (2020), PMLR, pp.~48--60.

\bibitem{Alon-et-al}
{\sc Alon, N., Bun, M., Livni, R., Malliaris, M., and Moran, S.}
\newblock Private and online learnability are equivalent.
\newblock {\em J. ACM 69}, 4 (2022).

\bibitem{assos2023online}
{\sc Assos, A., Attias, I., Dagan, Y., Daskalakis, C., and Fishelson, M.~K.}
\newblock Online learning and solving infinite games with an erm oracle.
\newblock In {\em The Thirty Sixth Annual Conference on Learning Theory\/}
  (2023), PMLR, pp.~274--324.

\bibitem{delle2023find}
{\sc Delle~Rose, V., Kozachinskiy, A., Rojas, C., and Steifer, T.}
\newblock Find a witness or shatter: the landscape of computable pac learning.
\newblock In {\em The Thirty Sixth Annual Conference on Learning Theory\/}
  (2023), PMLR, pp.~511--524.

\bibitem{gourdeau2024computability}
{\sc Gourdeau, P., Tosca, L., and Urner, R.}
\newblock On the computability of robust pac learning.
\newblock In {\em The Thirty Seventh Annual Conference on Learning Theory\/}
  (2024), PMLR, pp.~2092--2121.

\bibitem{hasrati2023computable}
{\sc Hasrati, N., and Ben-David, S.}
\newblock On computable online learning.
\newblock In {\em International Conference on Algorithmic Learning Theory\/}
  (2023), PMLR, pp.~707--725.

\bibitem{hodges1997shorter}
{\sc Hodges, W.}
\newblock {\em A shorter model theory}.
\newblock Cambridge university press, 1997.

\bibitem{kozachinskiy2024simple}
{\sc Kozachinskiy, A., and Steifer, T.}
\newblock Simple online learning with consistent oracle.
\newblock In {\em The Thirty Seventh Annual Conference on Learning Theory\/}
  (2024), PMLR, pp.~3241--3256.

\bibitem{littlestone1988learning}
{\sc Littlestone, N.}
\newblock Learning quickly when irrelevant attributes abound: A new
  linear-threshold algorithm.
\newblock {\em Machine learning 2\/} (1988), 285--318.

\bibitem{Shelah}
{\sc Shelah, S.}
\newblock Classification theory and the number of nonisomorphic models.
\newblock {\em Journal of Symbolic Logic 47}, 3 (1982), 694--696.

\bibitem{sterkenburg2022characterizations}
{\sc Sterkenburg, T.~F.}
\newblock On characterizations of learnability with computable learners.
\newblock In {\em Conference on Learning Theory\/} (2022), PMLR,
  pp.~3365--3379.

\end{thebibliography}
\end{document}